\definecolor{grey50}{rgb}{0.5,0.5,0.5}
\newcommand{\FIXME}[1][]{\textcolor{red}{\textbf{FIXME}}\ifthenelse{\equal{#1}{}}{#1}{: \textcolor{grey50}{\emph{#1}}}}
\newcommand{\avik}[1]{{\normalsize{\textbf{({\color{red}Avik:\ }#1)}}}}
\newcommand{\sam}[1]{{\normalsize{\textbf{({\color{blue}Sam:\ }#1)}}}}
\setlist[enumerate,1]{label=\textit{\alph*)}}
\theoremstyle{plain}
\newtheorem{proposition}{Proposition}
\newtheorem{thm}[proposition]{Theorem}
\newtheorem{corollary}{Corollary}
\newtheorem{lemma}{Lemma}
\newtheorem{definition}{Definition}
\theoremstyle{remark}
\newtheorem{remark}{Remark}
\newcommand{\smm}[1]{\left[\begin{smallmatrix}#1\end{smallmatrix}\right]}
\newcommand{\mat}[1]{\left[\begin{matrix}#1\end{matrix}\right]}
\newcommand{\set}[1]{\left\{ #1 \right\}}
\newcommand{\paren}[1]{\left(#1\right)}
\newcommand{\into}{\rightarrow}
\newcommand{\bbR}{\mathbb{R}}
\newcommand{\bbN}{\mathbb{N}}
\newcommand{\eps}{\varepsilon}
\newcommand{\wt}{\widetilde}
\newcommand{\D}{\mathrm{D}}
\newcommand{\real}{\mathrm{Re}}
\newcommand{\id}{\mathrm{id}}
\newcommand{\restr}[1]{\vert_{#1}}
\newcommand{\sE}{\mathscr{E}}
\newcommand{\sF}{\mathscr{F}}
\newcommand{\sG}{\mathscr{G}}
\newcommand{\sO}{\mathscr{O}}
\newcommand{\sP}{\mathscr{P}}
\newcommand{\sU}{\mathscr{U}}
\newcommand{\sX}{\mathscr{X}}
\newcommand{\rstext}{R}
\newcommand{\rst}{\av{R}}
\newcommand{\vf}{F}
\newcommand{\grd}{\gamma}
\newcommand*\av[1]{%
  \hbox{%
    \vbox{%
      \hrule height 0.5pt 
      \kern0.3ex
      \hbox{%
        \kern-0.2em
        \ensuremath{#1}%
        \kern0.0em
      }%
    }%
  }%
}
\newcommand{\avg}[1]{\overline{#1}}
\newcommand{\ang}{\psi}
\newcommand{\AHS}{averageable system}
\title{A Hybrid Dynamical Extension of Averaging}
\author{Avik De$^\star$, Samuel A. Burden$^\dagger$ and Daniel E.\ Koditschek$^\star$
  \thanks{$^\star$Electrical and Systems Engineering, University of Pennsylvania, Philadelphia, PA, USA.
  {\tt\footnotesize \{avik,kod\}@seas.upenn.edu}.}
  \thanks{$^\dagger$Electrical Engineering, University of Washington, Seattle, WA, USA.
  {\tt\footnotesize sburden@uw.edu}.}
  \thanks{This work was supported in part by 
  the ARL/GDRS RCTA project, Coop.\ Agreement \#W911NF--10-–2−-0016,
  ARO Young Investigator Program Award \#W911NF-16-1-0158 to S. Burden,
  and NSF grant \#1028237.}
}
\begin{document}
\maketitle
\thispagestyle{empty}
\pagestyle{empty}


\begin{abstract}
We extend a  smooth dynamical systems 
averaging technique to a class of hybrid systems with a limit cycle that is particularly relevant to the synthesis of stable legged gaits. After introducing a definition of hybrid averageability sufficient to recover the classical result, we provide a simple illustration of its applicability to legged locomotion and conclude with some rather more speculative remarks concerning the prospects for further generalization of these ideas. 
\end{abstract}


\section{Introduction}
\label{sec:intro}

The emergence of physically motivated and mathematically tractable hybrid  models \cite{johnson_hybrid_2015,burden_event-selected_2014,burden_model_2015}  offers the prospect of  extending classical ideas and techniques of dynamical systems theory for application to new control settings.  In this paper we work at  the intersection of a class of tractable hybrid legged locomotion models \cite{johnson_hybrid_2015} with a class of well--behaved hybrid limit cycle models \cite{burden_model_2015} to generalize an initial result \cite{de_averaged_2015} on the stability of ``averageable'' hybrid oscillators.  Specifically, we extend a classical smooth dynamical averaging technique to a class of hybrid systems with a limit cycle that is particularly relevant to the synthesis of stable gaits.  While our present technical focus precludes more than a simple illustrative example, our aim in pursuing this generalization beyond the narrow sufficient conditions imposed in \cite{de_averaged_2015} is to  enable much broader subsequent applicability of this result to stable composition of dynamical gaits in a new family of legged machines \cite{kenneally_design_2016}. 

In the classical (i.e. smooth) setting, the effect of weak periodic perturbations can be approximated by ``averaging'' continuous 
dynamics over one period \cite[Ch.~4.1]{guckenheimer_nonlinear_1990}. This technique has two benefits for assessing stability:  first, as entailed by the passage to a return map, it reduces the state dimension
by 1 (as the ``averaged'' variable is removed); second, it formally justifies the neglect of certain complicated transient dynamical effects that, on average, do not affect the system's asymptotic behavior. In Def. \ref{def:AverageableHybridSystem}  we generalize such classically 
averageable systems to hybrid models whose continuous flow is punctuated by a single discrete reset. This reset introduces an abrupt change in the system's state that precludes application of the classical result.  However, by incorporating and appropriately analyzing the reset map's contribution to the hybrid system's Poincar\'{e} map, we provide conditions under which the cumulative effect of many iterations of hybrid flow-and-reset dynamics can be approximated using a single classical system's flow. We envision that future work may yield generalizations to systems with multiple domains and time scales, and provide some indications for how such generalizations might be obtained (see Sec.~\ref{sec:conclusion}).

\subsection{Relation to Classical Averaging}
\label{sec:RelationToClassical}

Classical averaging \cite[Ch.~4.1]{guckenheimer_nonlinear_1990}
yields a method of approximating (with error bounds) solutions of the $T$-periodic vector field \eqref{eq:OriginalSystem} using the averaged vector field \eqref{eq:AveragedSystem}.
As in the classical case, our results  guarantee equivalence in stability type to a  simpler approximant (named
the { \em averaged system}) of the system of interest.
Specifically, we show that if the return map of the averaged system has a hyperbolic periodic orbit, then so does the original system, and additionally the linearizations of the return maps are $\eps^2$-close (and thus share the same eigenvalues and eigenvectors to $\sO(\eps)$).

\subsection{Contributions and Organization}

Sec. \ref{sec:hybAvg} introduces an averaging result for hybrid systems in a single domain with non-overlapping guards and fixed time--of--flow (Lemma \ref{lem:CFTavg}).
Subsequently, in Thm. \ref{thm:avg}, we provide a condition under which the
more general case (where the flow time between resetting  is not constant) reduces to the former case (by appropriately redefining the reset map).
Sec. \ref{sec:avgExampleVH} illustrates the utility of this result to the stability analysis of physically interesting models by application of Cor. \ref{cor:computational} to the vertical hopper of Fig. \ref{fig:vertHop}), as well as accompanying simulation to indicate the physical relevance of this theory.
Sec. \ref{sec:conclusion} provides some examples to demonstrate the present limits of this theory, and a conclusion.


\section{Averaging in Single-Mode Hybrid Systems}
\label{sec:hybAvg}

We introduce our class of ``averageable'' hybrid systems  by first imposing the continuous dynamics  properties arising in the classical  setting that entail a flow with a ``fast'' coordinate whose influence over a ``long'' period turns out to be negligible (Def. \ref{def:AverageableHybridSystem}\ref{defSpace}--\ref{defCtsFP}). We then augment the classical setup by introducing a guard and associated  reset map whose base and tangent properties guarantee the overall return map will be hyperbolic (Def. \ref{def:AverageableHybridSystem}\ref{defGuard}--\ref{defRetHyperbolic}) affording the carry-over of stability conclusions to the perturbed  system. The essential result in Lemma \ref{lem:CFTavg} follows from two applications of the implicit function theorem (adapted for this purpose in Lemma \ref{lem:nearbyOrbits}) to track the fixed points of a (first  partially averaged and then fully averaged) approximant to the original Poincar\'{e} map together with calculations (relegated to the Appendix) bearing on the order of magnitude perturbation in the variational flow and tangent maps.  We find it convenient to precede this definition with some calculations in Sec. \ref{sec:avgNonParallel} bearing on the structure of the time-to-reset event map that enter into our sufficient conditions in the ``averageable systems'' Def. \ref{def:AverageableHybridSystem} of Sec. \ref{sec:fftAvg}.


\begin{figure}[t]
\centering
\def\svgwidth{\columnwidth}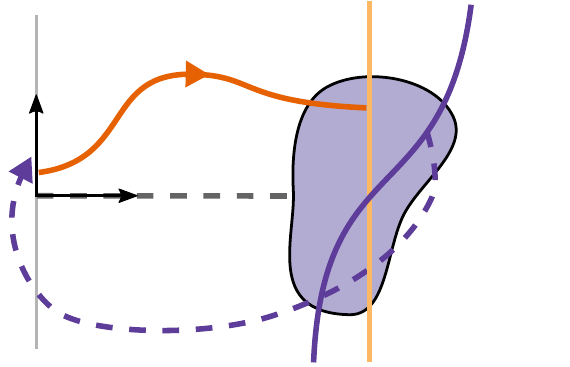
\caption{%
We define a class of \AHS s (Def. \ref{def:AverageableHybridSystem}) with a single domain, fast ($x_1$) and slow ($x_2$) coordinates, with general conditions on the flow (orange), and requirements on fixed points of the flow and the reset, $x^*$. The calculations of \ref{sec:avgNonParallel} show how to construct a new hybrid system with constant flow time (yellow, guard $\av \sG := \{x_1^*\}\times \sX_2$) for any hybrid system with variable flow time (purple, guard $\sG = \gamma^{-1}(0)$) by augmenting with a flow--to--reset map.%
}%
\label{fig:npguard}
\end{figure}

\subsection{Transformation from Variable to Constant Flow Time}
\label{sec:avgNonParallel}

We seek a hybrid dynamical extension of averaging.
This involves introducing a \emph{guard} surface into the state space of an ODE with a cyclic variable that triggers a discrete \emph{reset} of the system state at trajectory crossing events.
The time elapsed between these events generally varies with initial condition, complicating our analysis.
Thus before proceeding, we digress to provide a construction that enables us to transform a system with variable flow time to a system with constant flow time that has equivalent%
\footnote{%
We use this term to denote a correspondence stronger than conjugacy; whereas the flows are indeed conjugate, the return maps are identical.}
Poincar\'e (i.e., flow–-and–-reset)  dynamics.

To that end, let:
$\sX$ be an open subset of Euclidean space;
$\vf$~$:$~$\sX$~$\to$~$T\sX$ be a $C^2$ vector field; 
$\rstext:\sX \to \sX$ be a $C^1$ function;
$\grd: \sX \to \bbR$ be a $C^1$ function such that with $\sG = \gamma^{-1}(0)$ we have%
\footnote{i.e. $F$ is transverse to the codimension--1 submanifold $\sG$}
$(\D\grd \cdot F)|_{\sG} \neq 0$; 
and $x^*\in\sG$ (hence $\grd(x^*) = 0$).
Let $\Phi:\sF\to\sX$ denote the maximal flow associated with $\vf$; recall that $\sF\subset\bbR\times\sX$ is an open set containing $\set{0}\times\sX$~\cite[Ch.~8~\S7]{hirsch_differential_1974}. 
Applying the implicit function theorem~\cite[App.~IV]{hirsch_differential_1974}
\footnote{justified since $\D_1 \grd \circ \Phi = \D\grd \cdot \vf \ne 0$}
to the equation
\begin{align}
\grd \circ \Phi(t, x) = 0
\label{eq:TTI}
\end{align}
with respect to $t$ at $(0,x^*)\in\sF$
yields a $C^1$ time-to-event map $\tau:\sU\to\bbR$ where $\sU\subset\sX$ is a neighborhood of $x^*$~\cite[Ch.~11~\S2]{hirsch_differential_1974}.  
Note that the image of $\tau$ includes negative times.

Now suppose $\sX = \sX_1\times \sX_2$ where 
$\sX_1\subset\bbR$ and
$\sX_2\subset\bbR^n$ are open, and let $x^* = (x_1^*,x_2^*)$. 
Let $\sU_2 = \set{x_2\in\sX_2 : (x_1^*,x_2)\in\sU}$ 
and
define $\rst:\sU_2\to\sX_2$ by
\begin{align}
\forall x_2\in\sU_2 : \rst(x_2) := \pi_2 \circ \rstext \circ \Phi(\tau(x_1^*,x_2), (x_1^*,x_2)).
\label{eq:vftRst}
\end{align}
Note that the derivative (i.e. gradient) of $\tau$ can be computed at $x^*$ by differentiating \eqref{eq:TTI} with respect to $x$%
\footnote{recall that $\D_2\Phi(0, x^*) = I$}
\begin{align}
\D \grd \cdot( \D_1 \Phi \cdot \D \tau + \D_2 \Phi) = 0
\implies 
\D \tau = \frac{-\D \grd}{\D \grd \cdot \vf}.
\label{eq:gradTTI}
\end{align}
Differentiating~\eqref{eq:vftRst} with respect to $x$%
\footnote{recall that $\D_1\Phi(0,x^*) = F(x^*)$}
and substituting using~\eqref{eq:gradTTI} we conclude that
\begin{align}
\D\rst = \D\pi_2 \Big( \D_2\rstext - \frac{\D\rstext \cdot \vf \cdot \D_2 \grd}{\D \grd \cdot \vf} \Big).
\label{eq:npDRgeneral}
\end{align}



\subsection{Averaging for Single-mode Hybrid Systems}
\label{sec:fftAvg}


\begin{definition}[averageable single-mode hybrid system]
\label{def:AverageableHybridSystem}
Given a separation of time-scales parameter $\eps \ge 0$, 
$H = (\sX,\vf,\sG,\rstext,x^*)$
is a
\emph{single-mode hybrid dynamical system averageable at $x^*\in\sX$} if:
\begin{enumerate}[label=(\roman*)]
\item \label{defSpace}
$\sX := \sX_1\times\sX_2$ is a domain comprised of
an open interval around the origin, $\sX_1 \subset \bbR$, containing the ``phase'' coordinate and a topological $n$-dimensional ball $\sX_2 \subset \bbR^n$ containing the remaining coordinate;

\item \label{defCtsDyn}
$F:\sX\to\bbR^{n+1}$ is an $\eps$-parameterized vector field with the form
\begin{align}
\dot x = F(x) = e_1 + \eps \mat{F_1(x) \\ F_2(x)},
\label{eq:OriginalSystem}
\end{align}
where: $e_1$ is the canonical unit vector of $\sX_1 \subset \sX$, and 
$F_1$~$:$~$\sX$~$\to$~$\bbR$, 
$F_2 : \sX \to \bbR^n$ 
are $C^2$ $\eps$-parameterized maps;

\item \label{defCtsFP}
$x^* = (x_1^*, x_2^*)$ is such that 
the averaged vector field $\avg{f}:\sX_2\to\bbR^n$ defined by
\begin{align}
\av{f}(x_2) := \frac{1}{x_1^*} \int_0^{x_1^*} F_2\big(\smm{\sigma\\x_2}\big)\Big\vert_{\eps=0}\,d\sigma
\label{eq:AveragedSystem}
\end{align}
has an equilibrium at $x_2^*$;

\item \label{defGuard}
$\sG\subset\sX$ 
is a codimension-1 embedded submanifold called a \emph{guard} 
that 
(i) contains $x^*$ and
(ii) can be specified (not uniquely) as the zero level-set of a $C^1$ regular function $\grd: \sX \to \bbR$, i.e. $\sG := \grd^{-1}(0)$, and $\D_1\grd|_{\sG} \ne 0$ for any such $\grd$.\footnote{An alternate way to state the last condition is that $\langle w, e_1 \rangle \neq 0$ where $w$ is a vector normal to $\sG$}

\item \label{defReset}
$\rstext : \sG \to \sX$ is an $\eps$-parameterized $C^2$ reset map such that 
\begin{enumerate*}[label=(\alph*)]
\item $\pi_1 \rstext(x) \equiv 0$%
\footnote{When $\pi_1 \circ R$ is constant but nonzero, a shift of the $x_1$ coordinate ensures $R$ satisfies the stated hypothesis. 
When $\pi_1 \circ R$ is not constant,
$R$ can be augmented with a flow-to-event map as in~\ref{sec:avgNonParallel} to satisfy the stated hypothesis.
}%
, and \label{defResetPhase}
\item the resetting rule for the slow coordinates
satisfies $\pi_2 R(x^*) = x_2^*$;
\label{defResetFP} 
\end{enumerate*}
\item \label{defRetHyperbolic}
$\D\av R$, the Jacobian derivative of $\rst$ \eqref{eq:vftRst}, 
has a Taylor series expension in terms of $\eps$ with the form%
\footnote{Note that we impose as a condition that $S_0$ is a constant matrix that does not vary with $x_2$; this property is employed in the proof of Lemma \ref{lem:CFTavg} through Appendix \ref{app:resetLipschitz}.}%
\begin{align}
\D\av R(x_2) = S_0 + \eps S_1(x_2) + O(\eps^2),
\label{eq:DRtaylor}
\end{align}
and at $x_2^*$ we have 
\begin{enumerate*}[label=(\alph*)]
\item $S_0$ is invertible,
\item unity eigenvalues of $S_0$ have diagonal Jordan blocks,
\item \label{defHyperbolic} $S_1 + x_1^* S_0 \D \av f$ is invertible\footnote{Used in the proof of Lemma \ref{lem:CFTavg} through Lemma \ref{lem:nearbyOrbits}.}. 
\end{enumerate*}
\end{enumerate}%
\end{definition}

For ease of exposition, in what follows we will refer to a tuple that satisfies the conditions of Def.~\ref{def:AverageableHybridSystem} simply as an \emph{averageable system}.
We will now specialize to systems with constant flow time to simplify the proof of Lemma~\ref{lem:CFTavg}, then generalize to systems with variable flow time in Thm. \ref{thm:avg}. 

\begin{definition}[{\AHS} with constant flow time]\label{def:fftsmahs}
An {\AHS} $(\sX,\vf,\sG,\rstext,x^*)$
\emph{has constant flow time} if 
\begin{align}
\sG = \av\sG := \{ x_1^* \} \times \sX_2.
\label{eq:constG}
\end{align}
\end{definition}
Note that applying the construction in \eqref{eq:vftRst} to an {\AHS} with constant flow time yields
\begin{align}
\rst(x_2) := \pi_2 \rstext(x_1^*, x_2)
\label{eq:fftRst}
\end{align}
since $\tau \equiv 0$, 
whence the Jacobian $\D\av R = \D\pi_2\D_2 R$ of~\eqref{eq:fftRst} agrees with the general formula \eqref{eq:npDRgeneral} (since $\D_2\gamma = 0$). 

\begin{remark}[Transformation from variable to constant flow time]
\label{rmk:equivCFT}
Let $H = (\sX,\vf,\sG,\rstext, x^*)$ satisfy the conditions of Def.~\ref{def:AverageableHybridSystem}.
Then, following the constructions in~\ref{sec:avgNonParallel}, we define $\av H = (\sX,\vf,\avg{\sG}, (0, \rst), x^*)$ with 
$\avg{\sG}$ as in~\eqref{eq:constG}
and
$\rst$ as in \eqref{eq:fftRst}.
Note that
\begin{enumerate}
\item $\av H$ is an {\AHS} with constant flow time and
\item the flow-and-reset dynamics (i.e. the Poincar\'{e} maps) are equivalent for the two systems:
\begin{align}
\rst\circ\pi_2\circ\Phi(x_1^*,(0,x_2)) = \pi_2\circ \rstext\circ\Phi(\tau(0,x_2),(0,x_2)),
\label{eq:equiv}
\end{align}
where $\tau$ is the time-to-event map for $\sG$, and~\eqref{eq:equiv} holds in a neighborhood of $\pi_2\circ\rstext(x^*)\subset\sX_2$ where both sides of the equation are defined.
\end{enumerate}
We leverage this equivalence to generalize from averaging systems with constant flow time (Lemma \ref{lem:CFTavg}) to averaging systems with variable flow time (Thm. \ref{thm:avg}).
\end{remark}

\begin{remark}[Relation to classical averaging]
Def.~\ref{def:AverageableHybridSystem}\ref{defCtsDyn} specializes  the continous dynamics in the formulation of a general hybrid dynamical system given by \cite{burden_model_2015} to the canonical form for classical averaging \cite{guckenheimer_nonlinear_1990}. From \eqref{eq:OriginalSystem}, we get
\begin{align}
\frac{d x_2}{d x_1} = \frac{\eps F_2(x)}{1 + \eps F_1(x)} =: \eps f(x, \eps),
\label{eq:ddphase}
\end{align}
which~\cite[(4.1.1)]{guckenheimer_nonlinear_1990} regards as a non-autonomous $x_1$-varying dynamical system. 
Note that the averaged vector field~\eqref{eq:AveragedSystem} is the $x_1$-average of \eqref{eq:ddphase} at $\eps = 0$, coinciding with the definition in \cite[(4.1.2)]{guckenheimer_nonlinear_1990}.

If in fact for every $x_2\in\sX_2$ we have $\rst = \id$ then:
\begin{enumerate}
\item we interpret the phase variable as residing in $\left[0, x_1^*\right]/\paren{0 \sim x_1^*}$ (i.e. a circle with circumference $x_1^*$), 
\item Def. \ref{def:AverageableHybridSystem}\ref{defRetHyperbolic} reduces to hyperbolicity of $\av{f}$,
\end{enumerate}
whence we recover the hypotheses necessary for classical averaging \cite[Ch. 4.1]{guckenheimer_nonlinear_1990} and our Lemma~\ref{lem:CFTavg} reduces to \cite[Thm 4.1.1(ii)]{guckenheimer_nonlinear_1990}.
\end{remark}

\begin{remark}[Multiple hybrid modes]
The theoretical statements in this paper all pertain to hybrid systems with a single guard; however, generalizing to cases with multiple disjoint
guards in the same ambient space 
is straightforward: 
if the periodic orbit intersects guards $\sG_1, \ldots, \sG_N$ transversely%
, we require for each $i \in \{1, \ldots, N\}$:
\begin{enumerate}
\item $F_i$, $x^*_i$, $\sG_i$ satisfy Def. \ref{def:AverageableHybridSystem}\ref{defCtsDyn}--\ref{defGuard} in each mode;
\item the composition $R_N \circ \cdots \circ R_1$ satisfies Def. \ref{def:AverageableHybridSystem}\ref{defReset};
\item the product $\prod_i \D \rst_i$ satisfies Def. \ref{def:AverageableHybridSystem}\ref{defRetHyperbolic}.
\end{enumerate}
The case where the hybrid modes reside in different spaces lies outside the scope of this paper (but see Sec. \ref{sec:fwMultDomains} for a discussion).
\end{remark}

\begin{remark}[Relation to smoothing]
Averageable hybrid systems are \emph{smoothable} in the sense that they satisfy the hypotheses of~\cite[Thm.~3]{burden_model_2015}.  Since that result gives a conjugacy to a classical (non-hybrid) vector field and since the smoothing does not affect the $\eps$-dependence of the vector field, it is unsurprising that we are able to extend classical averaging theory to the present hybrid setting. 
The contribution in this paper is the provision of a constructive---in fact, computational (Sections \ref{sec:avgStability}, \ref{sec:avgExampleVH})---method useful for stability analysis of hybrid systems.
\end{remark}


We are now prepared to state and prove our first technical result.\footnote{We develop our results in a manner roughly mirroring \cite[Sec. 4.1]{guckenheimer_nonlinear_1990}. Note that while \cite[Thm. 4.1.1(i)]{guckenheimer_nonlinear_1990} does not in itself assert the existence of a periodic orbit and holds for a flow over an open interval, the next  proposition \cite[Thm. 4.1.1(ii)]{guckenheimer_nonlinear_1990}. Equation \cite[(4.1.2)]{guckenheimer_nonlinear_1990} assumes the existence of a hyperbolic fixed point of the continuous dynamics, which we have extended to the hybrid setting by  positing an averaged continuous time equilibrium state in Def. \ref{def:AverageableHybridSystem}\ref{defCtsFP} together with a suitably fixed reset in Def. \ref{def:AverageableHybridSystem}\ref{defReset}, imposing algebraic properties on its tangent map sufficient for hyperbolicity in Def. \ref{def:AverageableHybridSystem}\ref{defRetHyperbolic}. 
So, our Lemma \ref{lem:CFTavg} provides a similar conclusion as \cite[Thm. 4.1.1(ii)--(iii)]{guckenheimer_nonlinear_1990}, except now in the hybrid system realm.}

\begin{lemma}[Averaging with constant flow time]
Let $(\sX, \vf, \sG, \rstext, x^*)$ be an {\AHS} (Def.~\ref{def:AverageableHybridSystem}) with constant flow time (Def.~\ref{def:fftsmahs}). 
Then for all $\eps > 0$ sufficiently small, 
eigenvalues of the linearization of the flow-and-reset map \eqref{eq:equiv}
are $\sO(\eps^2)$-close to those of 
$\D \av P(x_2^*) := \D\av R(x_2^*) \cdot (I + \eps x_1^* \D \av f(x_2^*))$. 
Thus the asymptotic dynamics of the averaged system approximate those of the original system.
\label{lem:CFTavg}
\end{lemma}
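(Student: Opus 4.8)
The plan is to exploit the constant-flow-time hypothesis to factor the flow-and-reset map into a reparametrized flow followed by the slow reset, transport the classical averaging estimate through that factorization, and finally convert an $O(\eps^2)$ perturbation of the linearized return map into an $O(\eps^2)$ perturbation of its spectrum. Concretely, I would first reparametrize~\eqref{eq:OriginalSystem} by the phase coordinate $x_1$ as in~\eqref{eq:ddphase}, so that the guard $\av\sG = \{x_1^*\}\times\sX_2$ is crossed after a fixed phase increment $x_1^*$. Writing $\Psi_\eps$ for the time-$x_1^*$ flow of the slow subsystem $dx_2/dx_1 = \eps f(x,\eps)$ and recalling~\eqref{eq:fftRst}, the flow-and-reset map on $\sX_2$ is $P_\eps = \rst\circ\Psi_\eps$, whose linearization factors as $\D P_\eps = \D\rst\cdot\D\Psi_\eps$.

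Next I would invoke classical averaging in the strengthened form that also controls the variational (derivative) flow, not merely the flow itself. The near-identity transformation $x_2 = y + \eps w(x_1,y)$ removing the first-order phase dependence (the \emph{partial} averaging step), followed by comparison with the fully averaged system $dy/dx_1 = \eps\av f(y)$, shows that $\Psi_\eps$ and its Jacobian are $\sO(\eps^2)$-close to those of the averaged flow; since $\av f(x_2^*)=0$ by Def.~\ref{def:AverageableHybridSystem}\ref{defCtsFP}, this yields $\D\Psi_\eps = I + \eps x_1^*\,\D\av f + O(\eps^2)$ at the equilibrium. The Gronwall-type bounds underlying this variational estimate are the calculations relegated to the Appendix. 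Because the fixed point of $P_\eps$ generally drifts by $O(\eps)$ from $x_2^*$, I would then track it through two applications of the implicit function theorem (Lemma~\ref{lem:nearbyOrbits}): once for the partially averaged approximant obtained after the near-identity transformation, and once for the fully averaged return map $\rst\circ\bar\Psi_\eps$ built from the averaged flow $\bar\Psi_\eps$. In both, the required nondegeneracy reduces at leading order to invertibility of $S_1 + x_1^* S_0\,\D\av f$, which is exactly Def.~\ref{def:AverageableHybridSystem}\ref{defHyperbolic}.

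Substituting $\D\rst = S_0 + \eps S_1 + O(\eps^2)$ together with the flow estimate, and evaluating at the tracked fixed point, gives
\begin{align}
\D P_\eps = S_0 + \eps\,(S_1 + x_1^* S_0\,\D\av f) + O(\eps^2) = \D\av P(x_2^*) + O(\eps^2),
\end{align}
so the two linearizations coincide through first order in $\eps$; in particular their difference is genuinely $O(\eps^2)$, and they share both the matrix $S_0$ and the first-order term $S_1 + x_1^* S_0\,\D\av f$.

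The hard part is the final step: upgrading this $\sO(\eps^2)$ matrix estimate to an $\sO(\eps^2)$ estimate on the spectra. Eigenvalues are only H\"older continuous at a defective eigenvalue, so a crude bound would degrade to $O(\eps^{2/k})$ at a size-$k$ Jordan block. This is precisely what conditions~\ref{defRetHyperbolic}(a)--(c) are designed to prevent: invertibility of $S_0$ keeps $P_\eps$ a local diffeomorphism; diagonalizability of the unity eigenspace of $S_0$ forces the eigenvalues clustering near $1$ --- exactly those governed by the averaged dynamics and where the distinction between $\sO(\eps^2)$ and a weaker rate could affect the stability conclusion --- to split to first order as $1 + \eps\,\mu_j + O(\eps^2)$, with $\mu_j$ the eigenvalues of the compression of $S_1 + x_1^*\,\D\av f$ to that eigenspace, so these eigenvalues are genuine $C^1$ functions of $\eps$; and invertibility of $S_1 + x_1^* S_0\,\D\av f$ makes each $\mu_j$ nonzero, simultaneously delivering hyperbolicity and pushing the near-unity eigenvalues off the unit circle. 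Since $\D P_\eps$ and $\D\av P$ share $S_0$ and this first-order term, their near-unity eigenvalues have identical $\eps$-derivatives and hence differ by $\sO(\eps^2)$, while the eigenvalues inherited from the part of the spectrum of $S_0$ away from the unit circle remain bounded away from it, so that the stability type is unambiguous and matched by the standard perturbation bound for isolated spectral groups. Combining the two groups yields the asserted $\sO(\eps^2)$ spectral closeness, and the retained hyperbolicity transfers the stability type of the averaged system to the original.
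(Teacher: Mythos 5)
Your proposal follows essentially the same route as the paper's proof: the classical near-identity averaging transformation, the factorization of the flow-and-reset linearization as $\D \rst \cdot \D Q$, two applications of the implicit function theorem via Lemma~\ref{lem:nearbyOrbits} (using exactly the nondegeneracy condition of Def.~\ref{def:AverageableHybridSystem}\ref{defRetHyperbolic}\ref{defHyperbolic}) to track the fixed points of the fully and partially averaged return maps, and variational estimates (your Gronwall bounds play the role of the paper's Peano--Baker computation in Appendix~\ref{app:retMapJac2} and the Lipschitz argument in Appendix~\ref{app:resetLipschitz}) yielding $\D P = \D\av P(x_2^*) + \sO(\eps^2)$. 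The one place you go beyond the paper is the final matrix-to-eigenvalue transfer, which the paper asserts without comment; your explicit use of semisimplicity of the unity eigenvalues of $S_0$ and the shared first-order term is a clarification of the same argument rather than a different route.
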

For the proof, we first present a Lemma based on the application of implicit function theorem (IFT) \cite{hirsch_differential_1974} to finding periodic orbits.
%

\begin{lemma}[Nearby fixed points]
Suppose $\sP\subset\bbR^n$ is an open subset, $P_\eps : \sP\to\sP$ 
is an $\eps$--parameterized \footnote{For clarity, in this Lemma we provide an explicit parameterization $P_\eps$ but for the remainder of the text this parameterization is implicit.} $C^2$ map such that $P_0(p^*) = p^*$, and 
the $\eps$-Taylor expansion $\D P_\eps(p^*) = A_0 + \eps A_1 + \sO(\eps^2)$ satisfies
\begin{enumerate}
\item $A_0$ is invertible,
\item unity eigenvalues of $A_0$ have a diagonal Jordan block,
\item $A_1$ is invertible.
\end{enumerate}

\noindent
Then there exists a $C^1$ function $\rho:\sE\into\sP$ defined over an open interval $\sE\subset\bbR$ containing the origin such that $\rho(\eps)$ is a fixed point of $P_\eps$ for all $\eps\in\sE$, 
i.e. $\forall\eps\in\sE : P_\eps(\rho(\eps)) = \rho(\eps)$.
\label{lem:nearbyOrbits}
\end{lemma}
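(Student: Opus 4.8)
The plan is to recast the fixed-point problem as root finding and apply the implicit function theorem~\cite{hirsch_differential_1974}, the subtlety being that the naive linearization is singular exactly when $A_0$ has a unity eigenvalue. Define the $C^2$ map $G(\eps,p) := P_\eps(p) - p$, so that $p$ is a fixed point of $P_\eps$ iff $G(\eps,p)=0$, and note $G(0,p^*)=0$. Since $\D_p G(0,p^*) = A_0 - I$, if $1\notin\operatorname{spec}(A_0)$ this derivative is invertible and the IFT delivers a $C^1$ branch $\rho$ with $\rho(0)=p^*$ at once; the entire difficulty is concentrated in the unity-eigenvalue directions, which hypotheses (a)--(c) are designed to control.

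First I would split off the non-degenerate directions by a Lyapunov--Schmidt reduction. Hypothesis (b) (unity eigenvalues have diagonal Jordan blocks) means the eigenvalue $1$ is semisimple, so $\bbR^n = N\oplus R$ with $N := \ker(A_0-I)$ and $R := \operatorname{im}(A_0-I)$, both $A_0$-invariant and with $(A_0-I)|_R$ an isomorphism. Writing $p=(u,w)\in N\times R$ and $G=(G_N,G_R)$ accordingly, the block $\D_w G_R(0,p^*) = (A_0-I)|_R$ is invertible, so a first application of the IFT solves $G_R=0$ for a $C^1$ map $w=\omega(\eps,u)$ near $(0,u^*)$. Substituting yields the reduced bifurcation equation $g(\eps,u) := G_N(\eps,u,\omega(\eps,u))=0$ on a neighborhood of $(0,u^*)$ in $\bbR\times N$, which is equivalent there to the original problem.

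The crux --- and the step I expect to be hardest --- is that $g$ is degenerate at first order: $g(0,u^*)=0$ but $\D_u g(0,u^*)=0$, because $A_0-I$ vanishes on $N$, so the IFT cannot be applied a second time directly. The resolution is to exploit the $\eps$-expansion. I would first confirm that the unperturbed fixed point persists to first order, i.e. $\partial_\eps P_\eps(p^*)|_{\eps=0}\in R=\operatorname{im}(A_0-I)$, so that $\partial_\eps g(0,u^*)=0$; this is what rules out a tangential, $\sqrt{\eps}$-type branch over only a one-sided interval, and in the intended application it is furnished by the vanishing of the averaged field at the equilibrium (Def.~\ref{def:AverageableHybridSystem}). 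Granting this, $g$ factors as $g(\eps,u)=\eps\,\widetilde g(\eps,u)$ with $\widetilde g$ of class $C^1$ and $\widetilde g(0,u^*)=0$, and a second application of the IFT produces the branch $u=\eta(\eps)$ provided $\D_u\widetilde g(0,u^*) = \partial_\eps\D_u g(0,u^*)$ is invertible. Computing this reduced Jacobian along the splitting gives $\pi_N A_1|_N$ (the $N$-to-$N$ block of $A_1$) corrected by a curvature term built from the Hessian of $P_0$ and $\partial_\eps\omega(0,u^*)$; establishing its invertibility is precisely where hypothesis (c) (invertibility of $A_1$) is invoked, and it is the genuinely delicate part of the argument. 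Setting $\rho(\eps) := (\eta(\eps),\omega(\eps,\eta(\eps)))$ then yields the required $C^1$ family of fixed points on an interval $\sE$ about the origin.
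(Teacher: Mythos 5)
Your overall route is essentially the paper's own, reorganized: the paper's map $\zeta_\eps(p) = E(\eps)\paren{P_\eps(p)-p}$, where $E(\eps)$ in \eqref{eq:Eeps} rescales the unity-eigenvalue directions by $1/\eps$, is precisely your Lyapunov--Schmidt splitting $\bbR^n = N\oplus R$ together with division of the reduced ($N$-component) equation by $\eps$, collapsed into a single application of the implicit function theorem instead of two. Your isolation of the two delicate points---the persistence condition $\partial_\eps P_\eps(p^*)\vert_{\eps=0}\in R$ and the invertibility of the kernel block of $A_1$---is accurate, and indeed sharper than the paper's exposition.

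Nevertheless there is a genuine gap at your pivotal step: ``granting $\partial_\eps g(0,u^*)=0$, $g$ factors as $g(\eps,u)=\eps\,\wt g(\eps,u)$ with $\wt g$ of class $C^1$.'' Divisibility by $\eps$ requires $g(0,u)=0$ \emph{identically} for all $u$ near $u^*$; the two pointwise conditions $g(0,u^*)=0$ and $\partial_\eps g(0,u^*)=0$ do not suffice. The distinction is fatal: for $n=1$, $p^*=0$, the map $P_\eps(p)=p+p^2+\eps p+\eps^2$ satisfies every hypothesis of the Lemma ($A_0=A_1=1$) and your persistence condition ($\partial_\eps P_\eps(0)\vert_{\eps=0}=0\in R=\set{0}$), yet $P_\eps(p)-p=p^2+\eps p+\eps^2$ has no real zeros for $\eps\neq 0$, so no branch $\rho$ can exist---the obstruction being exactly $g(0,p)=p^2\not\equiv 0$. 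What furnishes identical vanishing in the paper's actual use of this Lemma is structure not recorded in its hypotheses: there $P_0$ is the composition of the identity map (the $\eps=0$ flow $\av Q_0 = \id$) with the $\eps=0$ reset, whose derivative is the \emph{constant} matrix $S_0$ (footnote to Def.~\ref{def:AverageableHybridSystem}\ref{defRetHyperbolic}), so $P_0(p)-p=(S_0-I)(p-p^*)$ takes values in $R$ and the $N$-component of $P_\eps(p)-p$ vanishes identically at $\eps=0$. (In fairness, the paper's own proof silently relies on the same fact: without it, $\zeta$ does not even extend continuously to $\eps=0$ away from $p^*$, so the IFT is inapplicable as written.) Separately, your final step cannot be closed from hypothesis (c), as you yourself suspected: invertibility of $A_1$ does not imply invertibility of the reduced Jacobian $\pi_N A_1\vert_N$ (take $A_0=\diag(1,2)$, $A_1=\smm{0&1\\1&0}$); the paper's Appendix~\ref{app:Dzeta} makes the same leap, since its limiting matrix is block upper-triangular and ``top $m$ rows of rank $m$'' does not render it invertible. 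A repaired statement must hypothesize invertibility of the kernel block of $A_1$ (with your curvature correction) together with identical vanishing of $\pi_N\paren{P_0 - \id}$ near $p^*$.
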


\begin{proof}
If 1 is an eigenvalue of $A_0$, let $m\in\bbN$ be its (algebraic and geometric) multiplicity; otherwise let $m = 0$.
By passing to the Jordan form, without loss of generality we assume $A_0$ has the following block form:
\begin{align}
A_0 = V \mat{I_m & 0 \\ 0 & U} V^{-1}
\label{eq:Kjordan}
\end{align}
where $I_m\in\bbR^{m\times m}$ denotes the $m\times m$ identity matrix and $1$ is not an eigenvalue of $U$.
Now let $E:\bbR\into\bbR^{n\times n}$ denote the block matrix
\begin{align}
\forall\eps\in\bbR : E(\eps) := V \mat{\frac{1}{\eps} I_m & 0 \\ 0 & I_{n-m}} V^{-1}.
\label{eq:Eeps}
\end{align} 
Finally,
define $\zeta_\eps : \sP \to \sP : p \mapsto E(\eps)\paren{P_\eps(p) - p}$%
\footnote{A naive definition of $\zeta_\mathrm{naive} = P_\eps(p) - p$ would fail to have a non-singular Jacobian w.r.t $p$ at $\eps=0$ if $A_0$ has any unity eigenvalues, precluding use of the implicit function theorem. 
In \cite[Thm. 4.1.1 proof]{guckenheimer_nonlinear_1990}
the choice $\zeta_\mathrm{GH} := \frac{1}{\eps}(P_\eps(p) - p)$ sufficed to overcome this obstacle since the absence of a reset map in the classical case implies $A_0 \equiv I$. 
Our $\zeta$ defined here generalizes the same idea to the situation where there is an intruding reset map.},
and observe that:
\begin{enumerate}
\item $\zeta_0 (p^*) = 0$; and
\item $\D_1 \zeta_\eps(p^*)$ is full rank for $\vert \eps \vert$ sufficiently small;
\end{enumerate}
this second point is proven in Appendix \ref{app:Dzeta}.
By the implicit function theorem~\cite[App.~IV]{hirsch_differential_1974},
there exists a $C^1$ 
function $\rho:\sE\into\sP$ defined over a sufficiently small open interval $\sE \subset\bbR$ containing the origin such that $\zeta_\eps( \rho(\eps) ) \equiv 0$, i.e. $\forall\eps\in\sE:P_\eps(\rho(\eps) ) = \rho(\eps)$.
\end{proof}


\begin{proof}[Proof of Lemma~\ref{lem:CFTavg} (based on \cite{guckenheimer_nonlinear_1990})]
The ODE \eqref{eq:OriginalSystem} satisfies all conditions required for the proof of \cite[Thm.~4.1.1(i)]{guckenheimer_nonlinear_1990} on the set $t\in(0,x_1^*)$. 
Construct the change of coordinates $x_2 = y + \eps w(y, t, \eps)$, as in~\cite{guckenheimer_nonlinear_1990}, so that the $\theta$-augmented dynamics of \eqref{eq:OriginalSystem} and \eqref{eq:AveragedSystem} become (respectively) the autonomous systems
\begin{align}
\dot y &= \eps \av{f}(y) + \eps^2 f_1(\theta, y, \eps), &\dot\theta =1,\label{eq:origCC}\\
\dot y &= \eps \av{f}(y), &\dot\theta = 1,\label{eq:avgCC}
\end{align}
where $(\theta, y) \in \sX$ and $f_1$ is a lumped remainder term.
Define $Q : \sX_2\to\sX_2$ and $\av{Q} :\sX_2\to\sX_2$ as the time-$x_1^*$ flows for \eqref{eq:origCC} and \eqref{eq:avgCC} from $x_1 = 0$, i.e. for every $x_2\in\sX_2$ let
\begin{align}
Q(x_2) = \pi_2\circ\Phi(x_1^*,(0,x_2)),\ \av{Q}(x_2) = \pi_2\circ\avg{\Phi}(x_1^*,(0,x_2)),
\label{eq:Qdefs}
\end{align}
where $\Phi$ is the flow for~\eqref{eq:origCC} and
$\avg{\Phi}$ is the flow for~\eqref{eq:avgCC}.
Finally, define
\begin{align}
P := \rst \circ Q,\ \av{P} := \rst\circ \av{Q}.
\label{eq:Pdefs}
\end{align}

Since $x_2^*$ is an equilibrium of~\eqref{eq:avgCC}, $\av{f}(x_2^*) = 0$. Using \cite[pg. 300]{hirsch_differential_1974}, the spatial derivative of the flow around an equilibrium is that of a linear time-invariant system, whence
\begin{align}
\D \av{Q}(x_2^*) = \exp\left(x_1^* \left(\eps \D \av{f}(x_2^*)\right)\right) = I + \eps x_1^* \D\av{f}(x_2^*) + \sO(\eps^2).
\label{eq:flowJacTI}
\end{align}
Note that $x_2^*$ is a fixed point of $\av{Q}$ since
it is both an equilibrium of $\av{P}$ as well as a fixed point of $\rst$ (Def. \ref{def:AverageableHybridSystem}\ref{defResetFP}), 
and therefore
\begin{align*}
\D \av P(x_2^*) = \D \rst(x_2^*) \left(I + \eps x_1^* \D\av{f}(x_2^*) \right) + \sO(\eps^2).
\end{align*}
From \eqref{eq:retMapJac1} and Def. \ref{def:AverageableHybridSystem}\ref{defRetHyperbolic}, we have the Taylor expansion
\begin{align}
\D \av{P}(x_2^*) = S_0 + \eps \left(S_1(x_2^*)+ x_1^* S_0 \D\av{f}(x_2^*)\right) + \sO(\eps^2).
\label{eq:retMapJac1}
\end{align}
First, note that since $S_0$ is invertible (Def.~\ref{def:AverageableHybridSystem}\ref{defRetHyperbolic}), the fixed point is hyperbolic and hence isolated for $\eps > 0$ sufficiently small.
Additionally, we know that
\begin{enumerate}
\item $\av{P}(\rho(0)) = x_2^*$ (from \eqref{eq:Pdefs}, \eqref{eq:flowJacTI} and Def.~\ref{def:AverageableHybridSystem}\ref{defResetFP}),
\item $\D \av{P}(x_2^*)\restr{\eps=0} = S_0(x_2^*)$ is invertible (from Def.~\ref{def:AverageableHybridSystem}\ref{defRetHyperbolic}), and
\item the $\sO(\eps)$ term in \eqref{eq:retMapJac1} is invertible (from Def.~\ref{def:AverageableHybridSystem}\ref{defRetHyperbolic}).
\end{enumerate}
Applying Lemma~\ref{lem:nearbyOrbits}, 
we conclude that
$\av{P}$ has a family of fixed points specified by a map $\rho:\sE\to\sX_2$ satisfying $\rho(0) = x_2^*$.

In Appendix \ref{app:resetLipschitz}, we show that
\begin{align}
\D\rst(\rho(\eps)) = \D\rst(x_2^*) + \sO(\eps^2),
\label{eq:DROeps2}
\end{align}
and in Appendix \ref{app:retMapJac2}, we show that $\D Q(\rho(\eps)) = \D \av Q(x_2^*) + \sO(\eps^2)$. 
Together with \eqref{eq:DROeps2}, we conclude
\begin{align}
\D P(\rho(\eps)) &= (\D\rst(x_2^*) + \sO(\eps^2))(\D \av Q(x_2^*) + \sO(\eps^2)) \nonumber\\
&= \D \av{P}(x_2^*) + \sO(\eps^2).
\label{eq:retMapJacUnAvg}
\end{align}
Since the $\eps$-expansion of \eqref{eq:retMapJacUnAvg} is identical to \eqref{eq:retMapJac1},
we can apply Lemma~\ref{lem:nearbyOrbits} again 
to conclude that $P$ 
has a family of fixed points specified by $\wt\rho:\wt\sE\to\sX_2$ satisfying $\wt\rho(0) = x_2^*$.
Reusing Appendix \ref{app:retMapJac2} and the argument in \eqref{eq:DROeps2} for the first equality below, we see that
\begin{align*}
\D P(\wt\rho(\eps)) &= \D P(\rho(\eps)) + \sO(\eps^2) \stackrel{\eqref{eq:retMapJacUnAvg}}{=} \D \av P(x_2^*) + \sO(\eps^2).
\end{align*}
Thus, we have shown that the eigenvalues of $\D P(\wt\rho(\eps))$ (the linearization of the return map at the fixed point of \eqref{eq:OriginalSystem}) are $\eps^2$-close to eigenvalues of $\D \av{P}(x_2^*)$, which has the simple form \eqref{eq:retMapJac1}.
\end{proof}

\begin{remark}[Lower and upper bounds on $\eps$]
The conclusion of the preceding Lemma is formally valid only for $\eps > 0$ sufficiently small.
However, it may be possible in practice to obtain lower or upper bounds on the allowable range for $\eps$.
\begin{enumerate}
\item Since we have invoked IFT in Lemma \ref{lem:nearbyOrbits}, it is straightforward (if tedious) to bound the size of the neighborhood in which \eqref{eq:OriginalSystem} has a periodic orbit as in \cite[Supplement 2.5A]{abraham_manifolds_1988}.
Alternatively, singular perturbation methods~\cite{tsatsos2006theoretical} may provide lower bounds on values of $\eps$ that ensure the conclusions of Lemma~\ref{lem:nearbyOrbits} hold.
%
\item An obstruction to enlarging the upper bound on $\eps$ appears in our example in Sec. \ref{sec:avgExampleVH}: the quotient \eqref{eq:vertHopVF} is only valid when $\dot x_1 > 0$, which is violated when $\eps > \omega$.
\end{enumerate}
\end{remark}


Combining the constructions in \ref{sec:avgNonParallel} with the conclusions of Lemma \ref{lem:CFTavg}, we obtain conditions under which a general {\AHS} (Def.~\ref{def:AverageableHybridSystem}) may be approximated by its average.
\begin{thm}[Averaging with variable flow time]
Let $ (\sX, \vf, \sG, \rstext, x^*)$ be an {\AHS} (Def.~\ref{def:AverageableHybridSystem}).
Then for all $\eps > 0$ sufficiently small, 
eigenvalues of the linearization of the flow-and-reset map \eqref{eq:equiv}
are $\sO(\eps^2)$-close to those of 
$\D \av P(x_2^*) := \D\av R(x_2^*) \cdot (I + \eps x_1^* \D \av f(x_2^*))$. 
Thus the asymptotic dynamics of the averaged system approximate those of the original system.
\label{thm:avg}
\end{thm}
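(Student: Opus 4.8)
The plan is to reduce the variable-flow-time case to the constant-flow-time case already resolved in Lemma~\ref{lem:CFTavg}, exploiting the transformation packaged in Remark~\ref{rmk:equivCFT}. Given the averageable system $H = (\sX, \vf, \sG, \rstext, x^*)$, I would first invoke the construction of Section~\ref{sec:avgNonParallel} to produce the associated system $\av H = (\sX, \vf, \av\sG, (0,\rst), x^*)$ with constant flow time, where $\av\sG = \{x_1^*\}\times\sX_2$ and $\rst$ is the flow-to-event-then-reset map of~\eqref{eq:vftRst}. Transversality of $\vf$ to $\sG$ (Def.~\ref{def:AverageableHybridSystem}\ref{defGuard}) justifies the implicit function theorem applied to~\eqref{eq:TTI}, so the time-to-event map $\tau$ is $C^1$ on a neighborhood $\sU$ of $x^*$ and $\rst$ is a well-defined $C^2$ map there.

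The second step is to check that $\av H$ is itself an averageable system with constant flow time. Conditions~\ref{defSpace}--\ref{defCtsFP} pass through verbatim because the continuous dynamics $\vf$, and hence the averaged field $\av f$ of~\eqref{eq:AveragedSystem}, are untouched by the transformation; condition~\ref{defGuard} holds for $\av\sG$ with $\grd(x) = x_1 - x_1^*$, for which $\D_1\grd\equiv 1\ne 0$; and condition~\ref{defReset} holds since $\pi_1(0,\rst)\equiv 0$ while $\pi_2(0,\rst)(x^*) = \rst(x_2^*) = x_2^*$, the last equality following from $\tau(x^*) = 0$ together with Def.~\ref{def:AverageableHybridSystem}\ref{defResetFP}. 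The crucial observation is that condition~\ref{defRetHyperbolic} is already stated in terms of $\D\av R$, the Jacobian of the \emph{transformed} reset~\eqref{eq:vftRst}; and since $\av H$ has constant flow time, its slow-coordinate reset Jacobian is exactly this same $\D\av R$ (the note following Def.~\ref{def:fftsmahs}). Thus the hyperbolicity hypotheses imposed on $H$ are literally the hyperbolicity hypotheses required of $\av H$, and nothing further need be verified --- this is precisely the purpose for which Def.~\ref{def:AverageableHybridSystem}\ref{defRetHyperbolic} was phrased using the variable-flow-time Jacobian formula~\eqref{eq:npDRgeneral}.

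With $\av H$ confirmed averageable with constant flow time, the third step is to apply Lemma~\ref{lem:CFTavg} to $\av H$, yielding that for all sufficiently small $\eps > 0$ the eigenvalues of the linearized flow-and-reset map of $\av H$ are $\sO(\eps^2)$-close to those of $\D\av P(x_2^*) = \D\av R(x_2^*)(I + \eps x_1^* \D\av f(x_2^*))$. I would then transfer this spectral conclusion back to $H$ through the Poincar\'e-map equivalence~\eqref{eq:equiv}: the flow-and-reset maps of $H$ and $\av H$ coincide as maps on a neighborhood of the fixed point in $\sX_2$ --- not merely up to conjugacy --- so their derivatives at the shared fixed point are equal and hence isospectral. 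Consequently the eigenvalues of the linearization of~\eqref{eq:equiv} for $H$ coincide with those for $\av H$, and are therefore $\sO(\eps^2)$-close to those of $\D\av P(x_2^*)$, establishing the theorem.

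The main obstacle is not any single computation but securing the equivalence~\eqref{eq:equiv} rigorously: one must confirm that $\rst\circ\pi_2\circ\Phi(x_1^*,(0,\cdot))$ and $\pi_2\circ\rstext\circ\Phi(\tau(0,\cdot),(0,\cdot))$ agree, and are simultaneously defined and $C^2$, on a common neighborhood of the fixed point, taking care that $\tau$ may assume negative values (so the reduction genuinely invokes the backward as well as forward flow of $\vf$). Once that identity of return maps is in hand, the eigenvalue statement is immediate, and every remaining ingredient --- the $\sO(\eps^2)$ tracking of the fixed point and the Jacobian expansion --- is inherited directly from Lemma~\ref{lem:CFTavg} and the pre-computed formula~\eqref{eq:npDRgeneral}.
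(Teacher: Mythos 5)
Your proposal is correct and follows essentially the same route as the paper's own proof: construct the constant-flow-time system $\av H$ via Remark~\ref{rmk:equivCFT}, verify it is an averageable system with constant flow time, apply Lemma~\ref{lem:CFTavg}, and transfer the spectral conclusion back to $H$ through the identity of Poincar\'e maps in~\eqref{eq:equiv}. Your explicit verification of conditions \ref{defSpace}--\ref{defRetHyperbolic} for $\av H$ (which the paper merely asserts) and your flagging of the need for a common domain of definition for~\eqref{eq:equiv} are welcome refinements of the same argument.
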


\begin{proof}
Let $\av H=(\sX,\vf, \{x_1^*\} \times \sX_2, (0, \rst))$ be as in Remark \ref{rmk:equivCFT}. 
Note that $\av H$ is an {\AHS} (Def.~\ref{def:AverageableHybridSystem}) with constant flow time (Def.~\ref{def:fftsmahs}), whence we can apply Lemma \ref{lem:CFTavg}. 
We conclude that  $\D \av R \circ \D Q (\wt x_2^*) = \D\av R \circ \D \av Q (x_2^*) +\sO(\eps^2)$, where $\wt x_2^*$ is the fixed point of $\av R \circ Q$. As in \ref{sec:avgNonParallel}, the linearization $\D \av R$ is identical for $H$ and $\av H$; we conclude that the linearization of the return map of $H$ satisfies $\D P (\wt x_2^*) = \D\av P (\wt x_2^*) + \sO(\eps^2) = \D\av R \circ \D\av Q (x_2^*) + \sO(\eps^2)$.
\end{proof}

\subsection{Stability of Orthogonally Reset Averageable Systems}
\label{sec:avgStability}



We focus in this section on the 
case where $S_0$ is an orthogonal matrix. 
Our motivation for this comes from the form of the $\eps$-parameterized return map \eqref{eq:retMapJac1}. If $S_0$ has eigenvalues which are not strictly on the unit circle, the asymptotic behavior is dominated by $S_0$ for small $\eps$ (rendering the continuous dynamics irrelevant). In this subsection we examine contractiveness of $\D P^T \D P$ as a sufficient condition for stability, in which case the relevant property of $S_0$ is orthogonality, a special case of having eigenvalues on the unit circle. Even though we only discuss a scalar example in this paper (Sec. \ref{sec:avgExampleVH}), orthogonally reset systems appear widely in hybrid system models for locomotion (e.g. \cite{altendorfer_stability_2004,seipel_running_2005,de_parallel_2015}).

Using Thm. \ref{thm:avg}, we expose in a simple formula (Cor.~\ref{cor:computational}) the relative contributions of the continuous (specified by $\vf$) and discrete (specified by $\rstext$) dynamics to the overall (in)stability of the {\AHS}.

\begin{definition}[{\AHS} with orthogonal reset]
An {\AHS}
$(\sX,\vf,\sG,\rstext,x^*)$  
\emph{has orthogonal reset}
if:
\begin{enumerate}
\item $S_0$ in \eqref{eq:DRtaylor} is an orthogonal matrix;
\item $W:= S_0 \cdot S_1 + x_1^* \D \av f$ is invertible at $x_2^*$.
\end{enumerate}%
\label{def:urhs}
\end{definition}

\begin{remark}
The first condition in the preceding definition implies that $S_0$ is invertible.
The second condition ensures Def. \ref{def:AverageableHybridSystem}\ref{defRetHyperbolic}\ref{defHyperbolic} is satisfied.
Thus orthogonal resetting provides one route to ensure the hypotheses of \ref{def:AverageableHybridSystem}\ref{defRetHyperbolic} are satisfied.
\end{remark}



\begin{corollary}[Stability of {\AHS} with orthogonal reset]
  An {\AHS} $(\sX,\vf,\sG,\rstext, x^*)$ with orthogonal reset has an exponentially stable periodic orbit if $W(x_2^*)^T + W(x_2^*) \prec 0$ (negative definite), where $W$ is as defined in Def. \ref{def:urhs}.%
\label{cor:computational}
\end{corollary}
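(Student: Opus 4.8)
The plan is to show that, for all sufficiently small $\eps>0$, the linearized Poincar\'e return map $\D P$ at the perturbed fixed point is a strict contraction in the Euclidean operator norm; since $\rho(\D P)\le\|\D P\|_2$ for this induced norm, that forces every multiplier into the open unit disk and hence produces an exponentially stable periodic orbit. Existence of the orbit and the governing expansion are already supplied by Thm.~\ref{thm:avg}: an orthogonally reset system is in particular an \AHS, so the full return-map Jacobian agrees with the averaged model to second order, $\D P = \D\av P(x_2^*) + \sO(\eps^2)$ (the matrix identity in the proof of Thm.~\ref{thm:avg}; cf.~\eqref{eq:retMapJacUnAvg}), with $\D\av P(x_2^*) = S_0 + \eps\,(S_1 + x_1^* S_0\,\D\av f) + \sO(\eps^2)$ from~\eqref{eq:retMapJac1}, all derivatives evaluated at $x_2^*$. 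Thus the new content of the corollary reduces to controlling $\D P^{\top}\D P = \D\av P^{\top}\D\av P + \sO(\eps^2)$, exactly the contractiveness program announced in Sec.~\ref{sec:avgStability}.

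The crux is an algebraic reduction powered by orthogonality of $S_0$ (Def.~\ref{def:urhs}). Since $S_0^{-1}=S_0^{\top}$ is a Euclidean isometry, left-multiplication by it preserves singular values, so $\|\D P\|_2 = \|S_0^{-1}\D P\|_2$. I would therefore peel off the orthogonal factor, obtaining
\begin{align}
N := S_0^{-1}\D P = I + \eps\,W + \sO(\eps^2),
\label{eq:Nred}
\end{align}
whose first-order coefficient is precisely the matrix $W$ of Def.~\ref{def:urhs} (consistent with the remark there, since $S_0 W = S_1 + x_1^* S_0\,\D\av f$). Squaring and again invoking $S_0^{\top}S_0 = I$ gives
\begin{align}
N^{\top}N = I + \eps\,(W + W^{\top}) + \sO(\eps^2).
\label{eq:NtN}
\end{align}
This is where orthogonality is indispensable: it is the property that collapses the cross term into the symmetric object $W+W^{\top}$ whose sign the hypothesis controls, whereas merely placing the eigenvalues of $S_0$ on the unit circle would leave a non-normal obstruction of the kind flagged in the motivation for Sec.~\ref{sec:avgStability}.

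It then remains to convert negative-definiteness into a strict norm bound. Because $W+W^{\top}\prec 0$ is strict, there is $\delta>0$ with $W+W^{\top}\preceq -\delta I$; choosing $\eps$ small enough that the $\sO(\eps^2)$ remainder in~\eqref{eq:NtN} is dominated by $\tfrac12\eps\delta I$ yields $N^{\top}N\preceq (1-\tfrac12\eps\delta)I\prec I$, hence $\|\D P\|_2=\|N\|_2<1$, and finally $\rho(\D P)\le\|\D P\|_2<1$. The fixed point of the Poincar\'e map is then exponentially stable, so the underlying periodic orbit is exponentially stable. I expect the one genuine obstacle to be the \emph{uniform} control of the $\sO(\eps^2)$ remainder so that the strictly negative first-order term dominates over a fixed interval of $\eps$; the remaining work---bookkeeping the Taylor expansion in~\eqref{eq:Nred} and justifying the isometry identity---is routine, and the invertibility of $W$ assumed in Def.~\ref{def:urhs} is what underwrites both the isolation of the fixed point (via Thm.~\ref{thm:avg}) and the strictness of the contraction.
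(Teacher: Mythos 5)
Your proposal is correct and follows essentially the same route as the paper's proof: both exploit orthogonality of $S_0$ to reduce the return-map Jacobian to $I + \eps W + \sO(\eps^2)$ (the paper via the quadratic form $\Vert S_0(I+\eps W)v\Vert^2 - \Vert v\Vert^2 = \eps\, v^T(W^T+W)v + \sO(\eps^2)$, you via $N^{\top}N$), then use $W + W^{\top} \prec 0$ to dominate the remainder and conclude contraction, transferring to the original system through Thm.~\ref{thm:avg}. The only difference is bookkeeping order (you invoke Thm.~\ref{thm:avg} before the contraction estimate, the paper after), which is immaterial.
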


\begin{proof}
The linearization of the averaged return map at $x_2^*$ is $\D \av{P} = \D \av{R} \cdot \D \av{Q} = (S_0 + \eps S_1)(I + \eps x_1^* \D \av{f}) = S_0 + \eps (S_1 + x_1^* S_0 \cdot \D \av{f}) + \sO(\eps^2) =  S_0(I + \eps W) + \sO(\eps^2)$.

For arbitrary unit vector $v$,
\begin{align*}
&\Vert S_0 (I+ \eps W)v\Vert^2 - \Vert v \Vert^2 \\
&= v^T (I + \eps(W^T + W)) v - v^T v + \sO(\eps^2) \\
&= \eps v^T (W^T+W) v  + \sO(\eps^2).
\end{align*}
For small $\eps > 0$, we see that the right hand is negative
since $W^T + W \prec 0$ by assumption. 
Thus $\D \av{P}$ is a contraction, whence the return map $\av{P}$ is locally exponentially stable.
Using Thm. \ref{thm:avg}, we conclude for $\eps > 0$ sufficiently small that the return map of the original system $P = \rstext \circ Q$ has an exponentially stable fixed point nearby.
\end{proof}


\section{Application: 1DOF Vertical Hopper}
\label{sec:avgExampleVH}

\begin{figure}[t]
\centering
\def\svgwidth{0.5\columnwidth}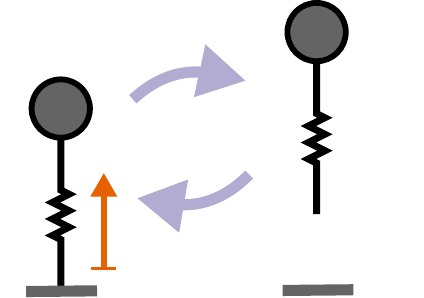
\caption{A vertical hopper model for the Sec. \ref{sec:avgExampleVH} example.}
\label{fig:vertHop}
\end{figure}

The application domain that motivated the preceding theoretical developments is legged locomotion on land.
A well-known model for running is that of a 
mass suspended on a massless leg by a (physical or virtual) spring \cite{geyer_compliant_2006,Blickhan_Full_1993}---the so-called Spring-Loaded Inverted Pendulum (SLIP)~\cite{Saranli_Schwind_Koditschek_1998}.
Variants of this model have been analyzed extensively in the literature, e.g. see \cite{koditschek_analysis_1991} for analysis of a one degree-of-freedom (1DOF) restriction of this planar point-mass model whose energizing input is inspired by the empirically successful strategies reported in \cite{raibert_legged_1986}. 
Informed by the structure of the averaging results of Sec.~\ref{sec:hybAvg}, we propose an alternative energization scheme 
(intuitively similar but physically distinct from~\cite{raibert_legged_1986}) 
and apply Thm.~\ref{thm:avg} to establish an analogous stability result. 

The physical model is illustrated in Fig.~\ref{fig:vertHop}: a unit mass restricted to travel along its vertical axis with an attached massless leg. The ``nominal'' leg length relative to a spring is $z_0$, and the actual leg length is $z$. 
In flight the system goes through a ballistic motion, whose effect on the state can---without any loss of generality or accuracy---be incorporated into the reset map.

\subsection{Averageable Vertical Hopper Model}
In the absence of damping and actuation, the spring-mass hopper exhibits constant stance duration~\cite{raibert_legged_1986}.
Introducing even small amounts of damping or actuation generally leads to variable stance durations, and this effect can influence the system's stability properties. 
In what follows, we show that introducing viscous drag and periodic forcing results in stance duration that varies \emph{weakly} in a sense made precise in~\eqref{eq:vhWeaklyVarying}, account for this effect in the linearized return map~\eqref{eq:vhResetFFT}, and assess stability of hopping in the resulting nonconservative system.

\subsubsection{Continuous dynamics}

Introducing viscous drag $-\eps \beta \dot{z}$ and actuation $u$,
the hopper's equations of motion are
\begin{align}
 \ddot z = \begin{cases}
u -  g - \eps  \beta \dot z & \text{(stance);}\\
- g & \text{(flight).}
\end{cases}
\end{align}

We consider the empirically-motivated weakly nonlinear periodic energization strategy from~\cite[eq. (8)]{de_averaged_2015} in stance, 
which involves defining phase-energy coordinates $x := (\ang,a)$ (where $x_1 = \ang$ denotes the ``phase'' coordinate of Def. \ref{def:AverageableHybridSystem}, and $x_2 = a$ denotes the remaining coordinate) such that
\begin{align}
a \sin\ang = z_0 - z,\qquad a \omega \cos\ang = -\dot z.
\label{eq:vhEPcoords}
\end{align}
 With this notation in force, and having met the requirements of Def. \ref{def:AverageableHybridSystem}\ref{defSpace} for $\ang$ in some open interval%
\footnote{%
It will become clear in \eqref{eq:vertHopAvgFP} that $\Psi \in \bbR$ lies within a single cycle, $0< \Psi < 2 \pi$.}
$\sX_1 := (-\Psi, \Psi) \subset S^1$ and $a \in \sX_2 := \overline{\bbR_+}$,   the feedback law from~\cite[eq. (8)]{de_averaged_2015} becomes
\begin{align}
u(x) :=  \left(g + \omega^2 a \sin\ang - \eps \omega k \cos\ang \right),
\label{eq:vhControlStance}
\end{align}
where the first two summands can be thought of as instantiating a virtual Hooke's law spring, and the last summand's negative damping\footnote{From \eqref{eq:vhEPcoords}, the last term of \eqref{eq:vhControlStance} is $-\eps k \omega \cos\ang = (\eps k/a) \dot z$ (forcing in the direction of velocity, but normalized by $a$).} term serves to supply the system with energy \cite{de_parallel_2015,secer_control_2013}.
In the phase-energy coordinates, the closed-loop dynamics are given by \cite[Eqn.~(10.36)]{khalil_nonlinear_2002}
\begin{align}
\dot x = \mat{\omega \\ 0} + \eps \mat{\frac{1}{\omega a} v(x) \sin\ang \\ -v(x) \cos\ang},
\label{eq:vertHopVF}
\end{align}
where $v(\ang, a):= \omega (a\beta - k)\cos\ang$.
With
\begin{align}
x_1^* = \pi,\qquad x_2^* = k/\beta,
\label{eq:vertHopAvgFP}
\end{align}
a straightforward computation yields
\small
\begin{align}
\int_0^\pi F_2\Big(\smm{\ang\\x_2^*}\Big) d \ang = -\int_0^\pi v(x) \cos\ang d\ang = \frac{\pi(k-x_2^* \beta)}{2} = 0,
\end{align}
\normalsize
so $x^*$ is a fixed point of the averaged vector field
\begin{align}
\av{f}(a) = \frac{k-a\beta}{2\omega},
\label{eq:vertHopAvgVF}
\end{align}
satisfying Def. \ref{def:AverageableHybridSystem}\ref{defCtsDyn}--\ref{defCtsFP}.

\subsubsection{Guard set}
\label{sec:vhGuard}

The guard set is defined by the physical liftoff event when the normal force at the toe-ground interface goes to 0, i.e. $u -  g - \eps  \beta\dot r = 0$. From \eqref{eq:vhControlStance}, this occurs at the zeros of
\begin{align}
\grd(x) := \omega \tan\ang - \eps \left(\frac{k}{a} - \beta\right).
\label{eq:vertHopGuard}
\end{align}
Note that $\grd(x^*) = 0$ at the fixed point of the averaged vector field \eqref{eq:vertHopAvgVF}, satisfying Def.~\ref{def:AverageableHybridSystem}\ref{defGuard}.

\subsubsection{Reset map}
\label{sec:vhReset}

As in \cite{de_averaged_2015}, the massless in-flight leg is reset to its nominal length, $z_0$. It follows from \eqref{eq:vhEPcoords} that 
the touchdown phase, $\ang$ is identically zero since $z = \rho$ at the touchdown event. 
Noting from \eqref{eq:vhEPcoords} that $\dot z = a \omega$ at touchdown, and recalling that the mechanical energy $\dot z^2/2 + g z$ is conserved in flight,
we can solve for $a$ at touchdown, yielding
\begin{align}
R(x) = \mat{0 \\ \sqrt{a^2\cos^2\ang - 2 g a\sin\ang/\omega^2} }.
\label{eq:vertHopReset}
\end{align}
Note that $\pi_1 R \equiv 0$ and $\pi_2 R(x^*) = x_2^*$, satisfying Def. \ref{def:AverageableHybridSystem}\ref{defReset}.

\begin{figure}[t]
\centering
\def\svgwidth{\columnwidth}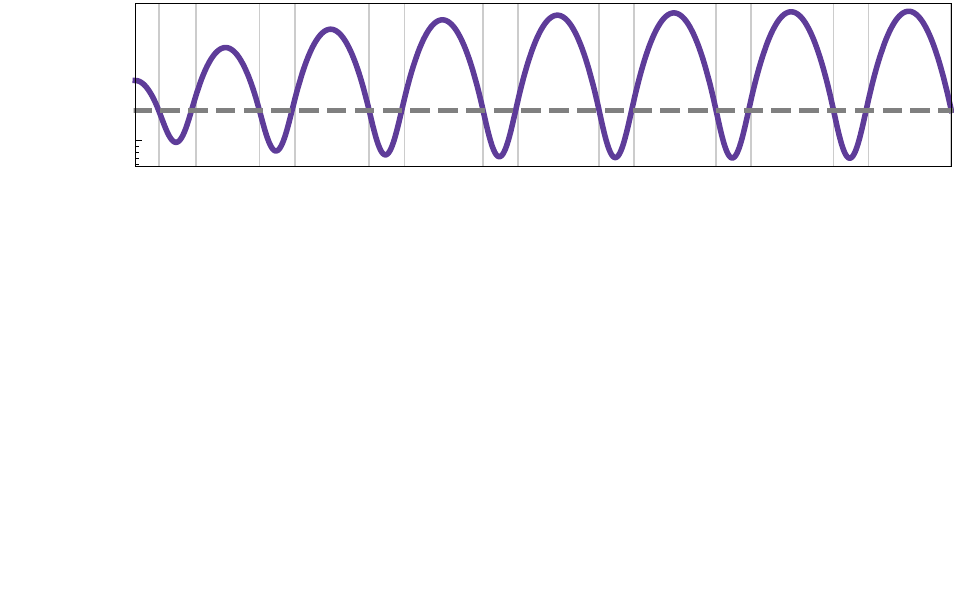
\caption{\textbf{(top)} displacement of vertical hopper in physical $z$ coordinates (thin vertical lines separate stance and flight); 
\textbf{(middle)} abstract energy coordinate $a$ \eqref{eq:vhEPcoords} in purple (dashed: flight), and, in gold, the equivalent continuous dynamical system \eqref{eq:vertHopAvgVF} over several hops.
\textbf{(bottom)} residual error in the $a$ coordinate between trajectories of the averaged and original systems.
}
\label{fig:vhavgplot}
\end{figure}


\subsection{Averaging the Vertical Hopper Model}

With $\sX_1$ and $\sX_2$ as defined above, $\sX = \sX_1\times\sX_2$,
$\vf$ as in~\eqref{eq:vertHopVF},
$\sG = \gamma^{-1}(0)$ with $\gamma$ as in~\eqref{eq:vertHopGuard},
$\rstext$ as in~\eqref{eq:vertHopReset},
and $x^*\in\sX$ as in~\eqref{eq:vertHopAvgFP},
the system $H = (\sX,\vf,\sG,\rstext,x^*)$ satisfies conditions \ref{defSpace}--\ref{defReset} from Def.~\ref{def:AverageableHybridSystem}. 
From \eqref{eq:vertHopGuard}, 
\begin{align}
\D\grd(x^*) = \mat{\sec^2\pi & \frac{\eps\beta^2}{k\omega}}  = \mat{1 & \frac{\eps\beta^2}{k\omega}},
\label{eq:vhWeaklyVarying}
\end{align}
whence by \eqref{eq:vertHopVF} we see that $\D\grd \cdot F(x^*) = \omega + \sO(\eps^2)$.  
Noting further that $\D R \cdot F(x^*) = g/\omega$, we find that \eqref{eq:npDRgeneral} simplifies to
\begin{align}
\D\rst &= 1 -\frac{\eps g \beta^2}{k \omega^3},
\label{eq:vhResetFFT}
\end{align}
where the second summand was introduced by the variable flow time, and would have been neglected if a constant stance duration approximation were employed. 
Using  \eqref{eq:vertHopAvgVF}  we can check that 
\begin{align}
W := S_1 + \pi\D\av f = -\frac{g \beta^2}{k\omega^3} -\frac{\beta \pi}{2 \omega} < 0,
\end{align}
satisfying Def. \ref{def:AverageableHybridSystem}\ref{defRetHyperbolic}. 
We have now checked all the conditions of Thm. \ref{thm:avg}, and from \eqref{eq:vhResetFFT} it is clear that 
\begin{enumerate*}
\item $S_0 = 1$ satisfies the ``orthogonality'' condition, and
\item $W < 0$ satisfies the rank condition
\end{enumerate*}
of Def. \ref{def:urhs}
We conclude from Cor. \ref{cor:computational} that the vertical hopper has a stable fixed point that is $\eps$-close%
\footnote{Practitioners may wish to note that $\eps$-closeness in state corresponds to $\eps$-closeness in energy in mechanical systems like this hopper.}
to \eqref{eq:vertHopAvgFP}.

\begin{remark}[approximating continuous control with discrete steps]
Note that the averaged vector field \eqref{eq:vertHopAvgVF} has the form of a proportional controller on total energy.  Thus Thm \ref{thm:avg} enables us to conclude that the cumulative control effect on body height from multiple isolated steps through a second-order ODE is approximately equivalent to that of a first-order ODE acting on body height (as shown in Fig.~\ref{fig:vhavgplot}(middle)).
\end{remark}


\subsection{Simulation Results}
\label{sec:vhSim}
Using parameters $\omega = 50$ rad/s, $k = 0.4$ N-s/m$^2$, $\beta = 10$ N/(m/s) and $\eps = 2$, numerical simulations%
\footnote{with \href{https://www.wolfram.com/mathematica/}{Mathematica 10}, using \href{http://reference.wolfram.com/language/ref/NDSolve.html}{NDSolve} and \href{https://reference.wolfram.com/language/ref/WhenEvent.html}{WhenEvent}}%
of the vertical hopper show that
\begin{enumerate}
\item the fixed point of the averaged system is approximately 0.15 mm away from the original system's fixed point (Fig. \ref{fig:vhavgplot} middle, difference between purple $a$ and dashed gray $a^*$), and
\item the residual error between trajectories of the averaged and original systems are an order of magnitude smaller than $a^* = k/\beta = 0.04 $ m (Fig.~\ref{fig:vhavgplot}, bottom).
\end{enumerate}


\section{Conclusions and Future Work}
\label{sec:conclusion}

This paper presents, to the best of our knowledge, the first instance
of a generalization to hybrid systems of a classical averaging result. Thus, Thm. \ref{thm:avg} joins a growing body of cases wherein suitably  constructed hybrid systems \cite{burden_event-selected_2014,burden_model_2015,ElderingJacobs2016,AmesSastry2006,WesterveltGrizzle2003,posa_stability_2016}
admit an  appropriately restated version of classical dynamical systems
results,  with useful applications to new engineering settings.
Application to a simple 1DOF model relevant to legged locomotion (Sec.~\ref{sec:avgExampleVH}) indicates that stability analyses of limit cycles in higher dimensional systems \cite{kenneally_design_2016,de_parallel_2015} could be greatly simplified, in analogy to the simple construction afforded by \cite{de_averaged_2015} relative to the initial controllers of \cite{de_parallel_2015}. This is an avenue of ongoing research being undertaken by the authors, and would add to the large body of emerging engineering--motivated  research to develop approximations of the behavior of nonlinear dynamical systems near reference trajectories
\cite{wu_variation-based_2015} (limit cycles in the case of this paper).


We conclude with some examples that demonstrate limitations of the present theory, and in doing so, motivate future theoretical work. 

\subsubsection{Extension to multiple domains}
\label{sec:fwMultDomains}

Intuitively, conditions \ref{def:AverageableHybridSystem}\ref{defCtsDyn}--\ref{defReset} together ensure that all non--phase coordinates vary slowly with respect to the phase. Robotic systems in steady--state operation---with asymptotically stable limit cycles---are one (important) class of systems our results apply to, but by no means the only.

Generalizing the results herein to hybrid systems with multiple modes or overlapping guards presents a number of challenges. With multiple domains, there is no privileged set of coordinates shared across disjoint portions of state space, so it is not obvious how to generalize conditions \ref{defCtsDyn} and \ref{defGuard} in Def. \ref{def:AverageableHybridSystem}. 
With overlapping guards, the return map is generally discontinuous~\cite[Table~3]{RemyBuffinton2010} or at least nonsmooth~\cite[Sec.~4.2]{burden_event-selected_2014}, so it is not obvious how to generalize conditions \ref{defReset} and  \ref{defRetHyperbolic} in Def. \ref{def:AverageableHybridSystem}.

\subsubsection{Effects of $\eps$-perturbation}

As discussed in the remarks after Def.~\ref{def:AverageableHybridSystem}, large--$\eps$ limits of classical averaging conclusions can be found in the literature (e.g. \cite[pg. 17]{tsatsos2006theoretical}). 
Numerically, as well as from our empirical experience in \cite{de_averaged_2015}, the vertical hopper example in Sec.~\ref{sec:avgExampleVH} retains the asymptotic behavior of \eqref{eq:vertHopAvgVF} for large $\eps$ (Sec. \ref{sec:vhSim}). 

\subsubsection{Rank condition in Def. \ref{def:AverageableHybridSystem}\ref{defRetHyperbolic}\ref{defHyperbolic}}

We emphasize that this condition is necessary for averaging along the lines of hyperbolicity in the classical theory \cite[Thm. 4.1.1]{guckenheimer_nonlinear_1990}.  Indeed, consider $H = (\sX, \vf, \sG, \rstext, x^*)$ with 
\[\sX = \bbR^2,\ F = \mat{1\\-\eps x_2},\ \sG = \{x_1^*\} \times \bbR,\ R = \mat{0\\ x_2 + \eps x_1^* x_2}\]
and $x^* = (x_1^*, 0)$ for any $x_1^* > 0$, and observe that $\av{f} = -x_2$ and $\D \av{f} = -1$. 
Since $S_1 + x_1^* \D \av{f} = 0$, $H$ violates Def.~\ref{def:AverageableHybridSystem}\ref{defRetHyperbolic}\ref{defHyperbolic}.  Note that the linearization of the return map,
\begin{align}
\D\rst \cdot \D\av{Q}(x) = (1 + \eps x_1^*) (1 - \eps x_1^*) + \sO(\eps^2) = 1 + \sO(\eps^2),
\end{align}
is not hyperbolic to $\sO(\eps)$, so we cannot assess stability using Thm.~\ref{thm:avg}. 


\subsubsection{Multiple fast coordinates}
\label{sec:concMultFast}

The form of \eqref{eq:OriginalSystem} does not apply directly to systems where there are multiple fast coordinates. However, we plan (in future work) to exploit slow phase difference dynamics, as previously demonstrated in  the classical averaging context \cite{proctor_phase-reduced_2010}.


%

\appendix


\subsubsection{Calculation of $\D_1\zeta$ for proof of Lemma \ref{lem:nearbyOrbits}}
\label{app:Dzeta}

Using \eqref{eq:Kjordan} and \eqref{eq:Eeps}, we can calculate
\begin{align*}
&V^{-1} \D_1 \zeta(p^*,\eps) V \\
&= \mat{\frac{1}{\eps} I_m & \\& I_{n-m}} \left(\mat{0 & \\ & U - I_{n-m}} + \eps V A_1 V^{-1}\right).
\end{align*}
In the limit $\eps \to 0$, the top $m$ rows have rank $m$ since $A_1$ is full rank, 
while the bottom $n-m$ rows evaluate to $U - I_{n-m}$; since $U$ has no unity eigenvalues, $U - I_{n-m}$ is also full rank.

For $\eps \neq 0$, the argument is unchanged for the first $m$ rows. For the bottom rows, the entries of $U - I_m$ dominate those of $\eps V A_1 V^{-1}$, and so by continuity of eigenvalues with matrix entries, the right hand side is full rank for sufficiently small $\vert \eps \vert$.



\subsubsection{Calculation of $\D\rst(\rho(\eps))$ for \eqref{eq:DROeps2}}
\label{app:resetLipschitz}

Using the $\eps$--expansion of $\D\rst$ in Def. \ref{def:AverageableHybridSystem}(\ref{defRetHyperbolic}),
\begin{align*}
\Vert \D\rst(\rho(\eps)) - \D\rst(\rho(0)) \Vert &\le 
\eps L_1 \Vert \rho(\eps) - \rho(0) \Vert = \sO(\eps^2), 
\end{align*}
where $L_1$ is the Lipschitz constant for $S_1$, and we used the fact that $S_0(\rho(\eps)) = S_0(\rho(0))$ as assumed in Def. \ref{def:AverageableHybridSystem}\ref{defRetHyperbolic}.

\subsubsection{Calculation of $\D Q(\rho(\eps))$ for \eqref{eq:retMapJacUnAvg}}
\label{app:retMapJac2}

The time-varying system~\eqref{eq:origCC} can reinterpreted as time-invariant in $x := (\theta, b)$ coordinates,
\begin{align}
\dot x = \mat{ 1 \\ \eps \av f(x_2) + \eps^2 f_1(x, \eps)} =: \wt \vf(x).
\end{align}
Let the time-$t$ flow of the $\wt\vf$ system be denoted by $\wt\Phi_t(x)$. We follow \cite[pg. 300]{hirsch_differential_1974} to compute the spatial Jacobian the flow. Note that $\D Q(\rho(\eps)) = \pi_2 \D_2 \wt\Phi_t(0, \rho(\eps))$ by definition, where $\pi_2$ is the projection on to the second component. Let $\wt x := (0, \rho(\eps))$. As in \cite{hirsch_differential_1974},
\begin{align}
\frac{d}{d t} \D \wt\Phi_t(\wt x) = A(t) \D \wt\Phi_t(\wt x),
\label{eq:LTVForJac}
\end{align}
where
\begin{align}
A(t) := \D \wt \vf(\wt\Phi_t(\wt x)) = \mat{0&0\\\eps^2 \D_1 f_1 & \eps \D \av f + \eps^2 \D_2 f_1}.
\label{eq:AofLTV}
\end{align}
The solution of this time-varying linear system from initial condition $\D \wt \vf(\wt\Phi_0(\wt x)) = I$
can be computed using the Peano-Baker series.
Since we are only interested in the lower right block of \eqref{eq:AofLTV},
\small
\begin{align}
&\D Q(\rho(\eps))
= I + \int_0^{x_1^*} \D \av f(\wt\Phi_t(\wt x)) d t + \sO(\eps^2) \nonumber\\
=&\,  I + \eps \int_0^{x_1^*} \left[ (\D \av{f} (\wt\Phi_t(\wt x)) - \D \av{f} (x_2^*)) + \D \av{f} (x_2^*) \right]d t + \sO(\eps^2) \nonumber\\
=&\,  I + \eps x_1^*\D\av{f}(x_2^*) + \eps \int_0^{x_1^*} \left[ \D \av{f} (\wt\Phi_t(\wt x)) - \D \av{f} (x_2^*) \right] d t + \sO(\eps^2),
\label{eq:peanoBaker}
\end{align}
\normalsize
where it is understood that $\D\av f$ only takes the second component of $\wt\Phi_t(\wt x)$ as argument (notation overloaded for brevity).

By continuity of the flow with respect to initial conditions~\cite[Thm~\S4]{hirsch_differential_1974}, we know that the $\eps$-perturbation of the initial condition, $(0, \wt x_2^*) = (0, \rho(\eps))$ from the usage of Lemma \ref{lem:nearbyOrbits} to find a fixed point of $\av{P}$, only affects the nominal solution $x_\mathrm{nom}(t) \equiv (0, x_2^*)$ in an $\sO(\eps)$ way, $\pi_2\wt\Phi_t(\wt x) = x_2^* + \sO(\eps)$.
Additionally, since   $\D\av{f}$ is Lipschitz continuous,
\[
D\av{f} \circ \pi_2 \wt{\Phi}(0,x_2^* + O(\epsilon))
= D\av{f}(x_2^*) + L\cdot O(\epsilon),
\]
where L is the Lipschitz constant, and \eqref{eq:peanoBaker} simplifies to 
\begin{align}
\D Q(\rho(\eps)) = I + \eps x_1^* \D\av{f}(x_2^*) + \sO(\eps^2).
\label{eq:flowJacToOeps}
\end{align}

\bibliographystyle{IEEEtrannourl}
\bibliography{refs}

\begin{thebibliography}{10}
\providecommand{\url}[1]{#1}
\csname url@rmstyle\endcsname
\providecommand{\newblock}{\relax}
\providecommand{\bibinfo}[2]{#2}
\providecommand\BIBentrySTDinterwordspacing{\spaceskip=0pt\relax}
\providecommand\BIBentryALTinterwordstretchfactor{4}
\providecommand\BIBentryALTinterwordspacing{\spaceskip=\fontdimen2\font plus
\BIBentryALTinterwordstretchfactor\fontdimen3\font minus
  \fontdimen4\font\relax}
\providecommand\BIBforeignlanguage[2]{{%
\expandafter\ifx\csname l@#1\endcsname\relax
\typeout{** WARNING: IEEEtran.bst: No hyphenation pattern has been}%
\typeout{** loaded for the language `#1'. Using the pattern for}%
\typeout{** the default language instead.}%
\else
\language=\csname l@#1\endcsname
\fi
#2}}

\bibitem{johnson_hybrid_2015}
A.~M. Johnson, S.~A. Burden, and D.~E. Koditschek, ``A {Hybrid} {Systems}
  {Model} for {Simple} {Manipulation} and {Self}-{Manipulation} {Systems},''
  \emph{The International Journal of Robotics Research}, 2016 (\emph{to
  appear}).

\bibitem{burden_event-selected_2014}
S.~A. Burden, S.~Revzen, S.~S. Sastry, and D.~E. Koditschek, ``Event-selected
  vector field discontinuities yield piecewise-differentiable flows,''
  \emph{SIAM Journal on Applied Dynamical Systems}, vol.~15, no.~2, pp.
  1227--1267, 2016.

\bibitem{burden_model_2015}
S.~Burden, S.~Revzen, and S.~Sastry, ``Model {Reduction} {Near} {Periodic}
  {Orbits} of {Hybrid} {Dynamical} {Systems},'' \emph{{IEEE} Transactions on
  Automatic Control}, vol.~60, no.~10, pp. 2626--2639, 2015.

\bibitem{de_averaged_2015}
A.~De and D.~E. Koditschek, ``Averaged anchoring of decoupled templates in a
  tail-energized monoped,'' in \emph{International Symposium on Robotics
  Research (ISRR)}, 2015.

\bibitem{kenneally_design_2016}
G.~Kenneally, A.~De, and D.~E. Koditschek, ``Design {Principles} for a {Family}
  of {Direct}-{Drive} {Legged} {Robots},'' \emph{IEEE Robotics and Automation
  Letters}, vol.~1, no.~2, pp. 900--907, 2016.

\bibitem{guckenheimer_nonlinear_1990}
\BIBentryALTinterwordspacing
J.~Guckenheimer and P.~Holmes, \emph{Nonlinear Oscillations, Dynamical Systems,
  and Bifurcations of Vector Fields}, ser. Applied Mathematical Sciences.\hskip
  1em plus 0.5em minus 0.4em\relax Springer New York, 1990.
\BIBentrySTDinterwordspacing

\bibitem{hirsch_differential_1974}
\BIBentryALTinterwordspacing
M.~Hirsch and S.~Smale, \emph{Differential {Equations}, {Dynamical} {Systems},
  and {Linear} {Algebra}}, ser. Pure and {Applied} {Mathematics}.\hskip 1em
  plus 0.5em minus 0.4em\relax Elsevier Science, 1974.
\BIBentrySTDinterwordspacing

\bibitem{abraham_manifolds_1988}
\BIBentryALTinterwordspacing
R.~Abraham, J.~Marsden, and T.~Ratiu, \emph{Manifolds, {Tensor} {Analysis}, and
  {Applications}}, ser. Applied {Mathematical} {Sciences}.\hskip 1em plus 0.5em
  minus 0.4em\relax Springer New York, 1988, no. v. 75.
\BIBentrySTDinterwordspacing

\bibitem{tsatsos2006theoretical}
M.~Tsatsos, ``Theoretical and numerical study of the {V}an der {P}ol
  equation,'' \emph{Doctoral desertation, Aristotle University of
  Thessaloniki}, 2006.

\bibitem{altendorfer_stability_2004}
\BIBentryALTinterwordspacing
R.~Altendorfer, D.~E. Koditschek, and P.~Holmes,
  ``\BIBforeignlanguage{en}{Stability {Analysis} of {Legged} {Locomotion}
  {Models} by {Symmetry}-{Factored} {Return} {Maps}},''
  \emph{\BIBforeignlanguage{en}{The International Journal of Robotics
  Research}}, vol.~23, no. 10-11, pp. 979--999, 2004.
\BIBentrySTDinterwordspacing

\bibitem{seipel_running_2005}
\BIBentryALTinterwordspacing
J.~E. Seipel, ``\BIBforeignlanguage{en}{Running in {Three} {Dimensions}:
  {Analysis} of a {Point}-mass {Sprung}-leg {Model}},''
  \emph{\BIBforeignlanguage{en}{The International Journal of Robotics
  Research}}, vol.~24, no.~8, pp. 657--674, Aug. 2005.
\BIBentrySTDinterwordspacing

\bibitem{de_parallel_2015}
A.~De and D.~E. Koditschek, ``Parallel composition of templates for
  tail-energized planar hopping,'' in \emph{{IEEE} {International} {Conference}
  on Robotics and {Automation} ({ICRA})}, 2015, pp. 4562--4569.

\bibitem{geyer_compliant_2006}
\BIBentryALTinterwordspacing
H.~Geyer, A.~Seyfarth, and R.~Blickhan, ``\BIBforeignlanguage{en}{Compliant leg
  behaviour explains basic dynamics of walking and running},''
  \emph{\BIBforeignlanguage{en}{Proceedings of the Royal Society B: Biological
  Sciences}}, vol. 273, no. 1603, pp. 2861--2867, Nov. 2006.
\BIBentrySTDinterwordspacing

\bibitem{Blickhan_Full_1993}
R.~Blickhan and R.~J. Full, ``Similarity in multilegged locomotion: Bouncing
  like a monopode,'' \emph{Journal of Comparative Physiology A: Sensory,
  Neural, and Behavioral Physiology}, vol. 173, no.~5, pp. 509–--517, 1993.

\bibitem{Saranli_Schwind_Koditschek_1998}
U.~Saranli, W.~J. Schwind, and D.~E. Koditschek, \emph{Toward the control of a
  multi-jointed, monoped runner}, 1998, vol.~3, p. 2676–2682.

\bibitem{koditschek_analysis_1991}
\BIBentryALTinterwordspacing
D.~E. Koditschek and M.~Buehler, ``\BIBforeignlanguage{en}{Analysis of a
  simplified hopping robot},'' \emph{\BIBforeignlanguage{en}{The International
  Journal of Robotics Research}}, vol.~10, no.~6, pp. 587--605, 1991.
\BIBentrySTDinterwordspacing

\bibitem{raibert_legged_1986}
\BIBentryALTinterwordspacing
M.~Raibert, \emph{Legged Robots that Balance}, ser. Artificial
  Intelligence.\hskip 1em plus 0.5em minus 0.4em\relax {MIT} Press, 1986.
\BIBentrySTDinterwordspacing

\bibitem{secer_control_2013}
G.~Secer and U.~Saranli, ``Control of monopedal running through tunable
  damping,'' in \emph{Signal Processing and Communications Applications
  Conference ({SIU})}, 2013, pp. 1--4.

\bibitem{khalil_nonlinear_2002}
\BIBentryALTinterwordspacing
H.~Khalil, \emph{Nonlinear Systems}, 3rd~ed.\hskip 1em plus 0.5em minus
  0.4em\relax Prentice Hall, 2002.
\BIBentrySTDinterwordspacing

\bibitem{ElderingJacobs2016}
\BIBentryALTinterwordspacing
J.~Eldering and H.~O. Jacobs, ``The role of symmetry and dissipation in
  biolocomotion,'' \emph{SIAM Journal on Applied Dynamical Systems}, vol.~15,
  no.~1, pp. 24--59, 2016.
\BIBentrySTDinterwordspacing

\bibitem{AmesSastry2006}
A.~D. Ames and S.~Sastry, ``{Hybrid Routhian reduction of Lagrangian hybrid
  systems},'' in \emph{American Control Conference (ACC)}, 2006.

\bibitem{WesterveltGrizzle2003}
E.~R. Westervelt, J.~W. Grizzle, and D.~E. Koditschek, ``{Hybrid zero dynamics
  of planar biped walkers},'' \emph{{IEEE} Transactions on Automatic Control},
  vol.~48, no.~1, pp. 42--56, 2003.

\bibitem{posa_stability_2016}
\BIBentryALTinterwordspacing
M.~Posa, M.~Tobenkin, and R.~Tedrake, ``Stability analysis and control of
  rigid-body systems with impacts and friction,'' \emph{IEEE Transactions on
  Automatic Control}, vol.~61, no.~6, pp. 1423--1437, 2016.
\BIBentrySTDinterwordspacing

\bibitem{wu_variation-based_2015}
\BIBentryALTinterwordspacing
G.~Wu and K.~Sreenath, ``Variation-{Based} {Linearization} of {Nonlinear}
  {Systems} {Evolving} on ${SO}(3)$ and ${S}^2$,'' \emph{IEEE Access}, vol.~3,
  pp. 1592--1604, 2015.
\BIBentrySTDinterwordspacing

\bibitem{RemyBuffinton2010}
\BIBentryALTinterwordspacing
C.~D. Remy, K.~Buffinton, and R.~Siegwart, ``Stability analysis of passive
  dynamic walking of quadrupeds,'' \emph{The International Journal of Robotics
  Research}, vol.~29, no.~9, pp. 1173--1185, 2010.
\BIBentrySTDinterwordspacing

\bibitem{proctor_phase-reduced_2010}
\BIBentryALTinterwordspacing
J.~Proctor, R.~P. Kukillaya, and P.~Holmes, ``\BIBforeignlanguage{en}{A
  phase-reduced neuro-mechanical model for insect locomotion: feed-forward
  stability and proprioceptive feedback},''
  \emph{\BIBforeignlanguage{en}{Philosophical Transactions of the Royal Society
  A: Mathematical, Physical and Engineering Sciences}}, vol. 368, no. 1930, pp.
  5087--5104, 2010.
\BIBentrySTDinterwordspacing

\end{thebibliography}

\end{document}